\documentclass[11pt]{article}

\PassOptionsToPackage{numbers,compress}{natbib}
\usepackage{natbib}
\usepackage[margin=1in]{geometry}

\usepackage[utf8]{inputenc}
\usepackage[T1]{fontenc}
\usepackage{algorithm}
\usepackage{algpseudocode}
\usepackage{hyperref}
\usepackage{url}
\usepackage{booktabs}
\usepackage{amsmath,amssymb,amsfonts,amsthm,mathtools}
\usepackage{graphicx}
\usepackage{microtype}
\usepackage{xcolor}
\usepackage{float}
\usepackage{enumitem}
\hypersetup{hidelinks}



\newtheorem{theorem}{Theorem}

\newtheorem{lemma}{Lemma}

\newtheorem{assumption}{Assumption}

\newtheorem{remark}{Remark}

\newcommand{\X}{\mathcal X}

\newcommand{\abs}[1]{\left\lvert #1 \right\rvert}

\newcommand{\argmax}

\title{From Relaxed Indexability to Exact Indexability: A $t$-Step Approach for Partially Observable Restless Bandits}

\author{
Keqin Liu\thanks{Corresponding author. The work was supported by  Leadership Talent Program (Science and Education) of 
SIP (KJQ2024202).}\\
School of Mathematics and Physics\\
Xi'an Jiaotong-Liverpool University\\
Suzhou 215123, China\\
\texttt{Keqin.Liu@xjtlu.edu.cn}
\and
Qizhen Jia\\
School of Mathematics and Physics\\
Xi'an Jiaotong-Liverpool University\\
Suzhou 215123, China\\
\texttt{Qizhen.Jia21@student.xjtlu.edu.cn}
}

\date{}

\begin{document}

\maketitle

\begin{abstract}
Whittle index policies offer a scalable method for restless multi-armed
bandits, but under partial observability even determining the indifference
subsidy at a single belief requires solving an infinite-horizon belief-state
problem with no closed-form value function. \citet{Liu2025} addresses this
difficulty by linearizing the unknown decision boundary, leading to a linear system and a closed-form approximate Whittle index.
However, the resulting threshold uses only a one-step active--passive
comparison and does not account for longer-horizon continuation values.

We extend this framework to a \emph{$t$-step lookahead threshold policy}.
For each subsidy $m$, the threshold is defined by the active-minus-passive
advantage under $t$-step finite-horizon value iteration. At $t=1$, the
threshold is $m$-independent and recovers the linear threshold of
\citet{Liu2025}; for $t>1$, it becomes subsidy-dependent through the induced
first-crossing structure and tracks the exact decision boundary more closely.
The proposed algorithmic framework does not require indexability as an input and includes
an indexability verification. Under the original
Whittle indexability, we prove that the $t$-step approximate Whittle index
converges geometrically to the exact Whittle index,
\[
|\widehat W_t(\omega)-W(\omega)|=O(\beta^t).
\]
Numerically, all 2,715 tested three-state instances are verified with computable priority index functions according to the proposed criterion. The P95 index error decreases from
$2.18\times10^{-2}$ at $t=1$ to $8.93\times10^{-4}$ at $t=8$. In an
exact-comparable instance with $\beta=0.9999$, $t=2$ already recovers the
exact Whittle-index ordering. Moderate-depth threshold policies also outperform the one-step baseline and remain close to the optimal
dynamic-programming benchmark, while runtime grows mildly with $t$.
\end{abstract}

\section{Introduction}

Whittle index policies offer a scalable approach to restless bandits by decoupling the multi-armed problem into single-arm subsidy problems \citep{Whittle1988,WeberWeiss1990,BertsimasNinoMora2000}. For a fully observable Markovian arm in state~$x$, the Whittle index $W(x)$ is the subsidy that makes the controller indifferent between activating the arm and leaving it passive. Ranking arms by these indices yields a policy that, under some large-system conditions, is asymptotically optimal \citep{WeberWeiss1990}.

Computation is simplest in the fully observable setting: the state space is finite, the dynamic programming objects are finite-dimensional, and statewise threshold structure facilitates index evaluation. Under partial observability, this finite-state structure is lost. The single-arm problem can still be formulated as a belief MDP, but the state becomes a belief vector~$\omega$ on a continuous simplex \citep{Sondik1978,Puterman1994}. 
The Whittle indifference condition therefore involves the value function of an infinite-horizon belief-state POMDP rather than a finite-state value. For general POMDPs, standard computational methods resort to finite-horizon and finite-state approximations of the continuous belief MDP \citep{Sondik1978,SaldiYukselLinderPOMDP2018}. 
In the partially observable restless bandit setting, recent Whittle index methods avoid solving the full belief-state POMDP by using boundary approximation \citep{Liu2025}.

\citet{Liu2025} obtains a computable approximate index for partially observable restless bandits with $K>2$ hidden states by replacing the unknown decision boundary with a one-step linearized threshold function $r(\omega)=\omega B^\top$. Combined with first-crossing times under passive belief updates, this yields a finite linear system whose solution gives a closed-form approximate Whittle index. The construction is computationally attractive but uses only a one-step lookahead at the active-passive comparison: continuation values beyond the immediate reward $\omega B^\top$ are not represented in the threshold rule. \citet{Liu2025} notes that the threshold can in principle be sharpened by a $t$-step active-passive comparison, but neither constructs the resulting index nor analyzes its consistency.

This paper develops that extension and analyzes its convergence. We define a
$t$-step lookahead threshold family $\{r_{t,m}\}_{t\ge1}$, where
$r_{t,m}(x)=Q_t^{A,\beta,m}(x)-Q_t^{P,\beta,m}(x)$ is the
active-minus-passive advantage under $t$-step finite-horizon value iteration.
The centered comparison $r_{t,m}(x)>r_{t,m}(\omega)=0$ replaces the one-step
linear threshold. At $t=1$, the threshold is independent of $m$ and recovers
the linearized threshold of \citet{Liu2025}; for $t>1$, the threshold is shaped by~$m$ and tracks the exact decision boundary more closely. The main
algorithm does not require indexability as an input: it computes the
finite-$t$ approximate index directly from the resulting threshold and
first-crossing linear system. When multiple candidate subsidy solutions
arise, a slight modification of the algorithm additionally performs a numerical indexability check by
comparing these solutions and verifying whether their maximum pairwise
difference is within a prescribed tolerance $\varepsilon_{\mathrm{ind}}$.

Our main theoretical result concerns convergence to the exact Whittle index.
Under the original Whittle indexability, we prove that, for all sufficiently
large $t$,
\[
\bigl|\widehat W_t(\omega)-W(\omega)\bigr|=O(\beta^t).
\]
Thus, the indexability assumption is used for the geometric convergence
guarantee rather than for implementing Algorithm~\ref{alg:approx-index-computation}. Numerical
experiments on three-state instances complement this theoretical result.
All 2,715 tested instances are numerically verified with computable index functions according to the proposed algorithmic framework. Moreover, $\widehat W_t$ converges to the high-depth
reference $\widehat W_{15}$, with the P95 error decreasing from
$2.18\times10^{-2}$ at $t=1$ to $8.93\times10^{-4}$ at $t=8$. In an
exact-comparable three-arm instance with $\beta=0.9999$, $t=2$ already
recovers the exact Whittle-index ordering. On a six-arm finite-horizon
instance, the $t=2$ and $t=5$ threshold-index policies outperform the
one-step threshold and the myopic baseline and remain close to the optimal
dynamic-programming solution. Runtime grows mildly with $t$, indicating that
moderate depths provide a practical accuracy--cost tradeoff.

\section{Related Work}

\paragraph{Index policies for restless bandits.}
The classical index-policy literature begins with Gittins indices for rested bandits \citep{Gittins1979} and the Lagrangian relaxation for restless bandits \citep{Whittle1988}. Subsequent structural foundations include the LP relaxation and primal-dual heuristic of \citet{BertsimasNinoMora2000}, the conservation-law and extended-polymatroid view of indexable systems \citep{BertsimasNinoMora1996}, partial-conservation-law methods \citep{NinoMora2001,NinoMora2002}, and marginal-productivity indices \citep{NinoMora2007}. These works characterize indexability through structured policy families, conservation laws, and marginal reward-resource measures.

\paragraph{PCL methods and high-dimensional belief models.}
The partial conservation law and marginal-productivity frameworks establish indexability and compute priority indices through reward and resource measures associated with structured families of single-arm policies \citep{NinoMora2001,NinoMora2002,NinoMora2007}. In the PCL verification framework, threshold policies are defined with respect to an ordered state, and the corresponding discounted reward and resource metrics are used to establish indexability. The partially observable models in \citet{LiuJia2025} preserve a tractable belief-state structure that allows the required threshold and renewal calculations. In our model with $K>2$, however, the belief belongs to the high-dimensional simplex $\Delta_{K-1}$, and there is no known scalar ordering under which the optimal passive sets form a nested threshold family. Applying PCL would therefore first require identifying such an ordering and deriving the corresponding threshold-policy performance metrics. This is precisely the unknown boundary problem in our setting. Building on \citet{Liu2025}, we instead use the $t$-step comparison
\[
r_{t,m}(x)>r_{t,m}(\omega)=0
\]
to construct a computable approximation of the unknown decision boundary $C(m)$ and then compute the approximate Whittle index. This construction provides a theoretical foundation for future applications of PCL methods to high-dimensional partially observable models.

\paragraph{Partially observable restless bandits.}
Partial observability replaces the finite state by a belief state, turning indexability into a statement about sets in a continuous simplex. Related models include dynamic multichannel access \citep{LiuZhao2010}, reset processes \citep{LiuWeberZhao2011}, imperfect observation \citep{LiuWeberZhang2024}, and Kalman-filter restless bandits \citep{DanceSilander2015}. Most of these works either restrict to two hidden states or impose structural assumptions (threshold structure, collapsing observation, restart) that make the exact Whittle index analytically available. Our setting follows \citet{Liu2025} in the partial observability model, where the exact boundary is not available and approximation is necessary.

The closest work is \citet{Liu2025}, who linearizes the unknown
boundary $C(m)$ to $r(\omega)=\omega B^\top=m$, computes first-crossing
times $L(\cdot,\omega)$ under passive dynamics, and obtains a
closed-form approximate Whittle index from a system of linear equations.
The same paper observes that a $t$-step active-passive comparison would
sharpen the boundary approximation but leaves both the construction and
the convergence analysis open. We close this gap by constructing the
subsidy-dependent $t$-step threshold, deriving the corresponding
indexability test and index formula, and establishing its geometric
convergence to the exact Whittle index.

\paragraph{Recent learning and scalability work for RMABs.}
Recent NeurIPS work studies learning and scalable decision making for restless bandits, including Whittle-index learning \citep{XiongLi2023}, average-reward restless bandits without the global-attractor assumption \citep{HongEtAl2023}, and regret analysis for restless-bandit learning \citep{JungTewari2019}. Implementation-oriented work studies belief deduplication, factorization reuse, and vectorization for Whittle-index computation under partial observability \citep{JiaLiu2026}.

\section{Model and indexability}
\label{sec3}

We study a $K$-dimensional state-revealing partially observable restless bandit. Following Whittle's relaxation~\citep{Whittle1988}, we replace the activation constraint by a subsidy $m$ for passivity.
The corresponding relaxed objective is the Lagrangian relaxation of the activation constraint. Because this objective is additive across arms and the arm dynamics are independent, the relaxed problem decomposes into $N$ independent single-arm problems.
We therefore focus on the single arm problem. The hidden state space is $\{0,1,\ldots,K-1\}$, the transition matrix is $P=\{p_{i,j}\}_{i,j=0}^{K-1}$, the discount factor is $\beta\in(0,1)$, and the reward vector is
\[
B=[B_0,B_1,\ldots,B_{K-1}],\qquad
0=B_0\le B_1\le\cdots\le B_{K-1}\le 1.
\]
A belief state is a row vector $\omega\in\X=\Delta_{K-1}$ (a $(K-1)$-dimensional simplex).  If the arm is passive, the belief evolves as $P^k(\omega)=\omega P^k$.  If the arm is active, the hidden state is observed and the next belief resets to the corresponding transition row $p_i=[p_{i,0},\ldots,p_{i,K-1}]$.

These belief dynamics induce a single-arm belief MDP. At belief $\omega$, activating the arm yields the expected immediate reward $\omega B^\top$, whereas leaving it passive yields the subsidy $m$. Let $V_{\beta,m}(\omega)$ be the exact infinite-horizon single-arm value:
\begin{align}
V_{\beta,m}(\omega)&=\max\{V_{\beta,m}(\omega;u=1),V_{\beta,m}(\omega;u=0)\},\\
V_{\beta,m}(\omega;u=1)&=\omega B^\top + \beta\omega
\bigl(V_{\beta,m}(p_0),\ldots,V_{\beta,m}(p_{K-1})\bigr)',\\
V_{\beta,m}(\omega;u=0)&=m+\beta V_{\beta,m}(P^1(\omega)).
\end{align}
To compare the two actions at a belief $\omega$, define the exact
action value gap as
\begin{equation}
\Delta_{\beta,m}(\omega):=V_{\beta,m}(\omega;u=1)-V_{\beta,m}(\omega;u=0).
\label{eq:exact-gap-main}
\end{equation}
Activation is optimal when $\Delta_{\beta,m}(\omega)\ge 0$ and passivity is optimal when $\Delta_{\beta,m}(\omega)\le 0$.

For the belief domain $\X$ and the subsidy interval $M:=[0,1]$, define the exact passive set
\[
P(m):=\{\omega\in \X:\Delta_{\beta,m}(\omega)\le0\}.
\]
Define also the exact active set and decision boundary,
\[
A(m):=\{\omega\in \X:\Delta_{\beta,m}(\omega)>0\},\qquad
C(m):=\{\omega\in \X:\Delta_{\beta,m}(\omega)=0\}.
\]

We say that the single-arm problem is exactly indexable on $\X$ if,
for every $\omega\in \X$, there exists a unique subsidy
$W(\omega)\in M$ such that
\[
\Delta_{\beta,m}(\omega)\ge0
\quad\text{for }m\le W(\omega),
\qquad
\Delta_{\beta,m}(\omega)\le0
\quad\text{for }m\ge W(\omega).
\]

The unique indifference subsidy $W(\omega)$ is the exact Whittle index of the belief state $\omega$. Computing $W(\omega)$ directly requires characterizing the exact decision boundary $C(m)$. Given this boundary, the value functions can be represented through the first time at which a passive belief trajectory enters the active region. For $K>2$, however, $C(m)$ lies in the continuous belief simplex and is generally unavailable in closed form. Consequently, the associated first-crossing times cannot be obtained directly from the exact problem. To circumvent this difficulty, we constructed a $t$-step threshold policy to approximate the exact decision boundary as developed in the next section.

\section{\texorpdfstring{$t$}{t}-step threshold policy 
and approximate Whittle index}
\label{sec4}

Motivated by the difficulty identified in Section~\ref{sec3}, we approximate the exact decision boundary using a $t$-step approximate threshold. We first define the corresponding threshold policy and first-crossing times, then solve the linear equation system to get the approximate Whittle index.

\paragraph{Finite-horizon value functions.}
For a lookahead depth $t\ge1$ and passivity subsidy $m\in M$,
define the finite-horizon value functions
$\{J_h^{\beta,m}\}_{h=0}^{t}$ by
\begin{align}
J_0^{\beta,m}(\omega) &:= 0, \\
Q_h^{A,\beta,m}(\omega)
&:=
\omega B^\top
+ \beta \sum_{i=0}^{K-1} \omega_i J_{h-1}^{\beta,m}(p_i), \\
Q_h^{P,\beta,m}(\omega)
&:=
m + \beta J_{h-1}^{\beta,m}(P^1(\omega)), \\
J_h^{\beta,m}(\omega)
&:=
\max\{Q_h^{A,\beta,m}(\omega),\, Q_h^{P,\beta,m}(\omega)\},
\qquad h = 1, \ldots, t.
\end{align}
The $t$-step active--passive gap is
\[
  r_{t,m}(\omega)
  :=
  Q_t^{A,\beta,m}(\omega) - Q_t^{P,\beta,m}(\omega).
\]

Fixing $\omega$ as a point on the approximate threshold, the corresponding $t$-step threshold policy is defined as follows.

\paragraph{$t$-step threshold policy.}
Fix a target belief $\omega\in\mathcal X$ and find a subsidy $m_t\in M$ such that $r_{t,m_t}(\omega)=0$, define the
$t$-step threshold policy
$\pi_t=\pi_t(\omega,m_t)$ by
\[
\pi_t(x)
=
\begin{cases}
1, & r_{t,m_t}(x)>r_{t,m_t}(\omega)=0,\\
0, & r_{t,m_t}(x)\le r_{t,m_t}(\omega)=0.
\end{cases}
\]
Thus, $\omega$ is a point on the approximate threshold,
and the arm is activated whenever the current advantage exceeds
$0$.

When $t=1$, $J_0\equiv0$ gives
\[
r_{1,m}(x)=xB^\top-m.
\]
Consequently,
\[
r_{1,m}(x)>r_{1,m}(\omega)
\quad\Longleftrightarrow\quad
xB^\top>\omega B^\top,
\]
which is independent of $m$ and recovers the linearized threshold
of \citet{Liu2025}.

\paragraph{First-crossing times.}
Under $\pi_t$, a passive arm starting from belief $x$ evolves as 
$x, P^1(x), P^2(x), \ldots$ until the threshold is first crossed.  
Define the first-crossing time
\begin{equation}
  L_t(x, \omega)
  :=
  \min_{0 \le k < \infty}\bigl\{k \ge 0 : r_{t,m_t}(P^k(x)) > 0\bigr\},
  \label{eq:crossing-main}
\end{equation}
and set $L_t(x, \omega; m_t) = \infty$ if no such $k$ exists. Under policy $\pi_t$ and belief points $p_0,p_1,\cdots p_{K-1}$ we have 
\begin{align}
    \hat{V}_{t, m}(p_k) &:= \frac{1 - \beta^{L(p_k, \omega)}}{1 - \beta}m +\beta^{L(p_k, \omega)}\hat{V}_{t, m}\left(P^{L(p_k, \omega)}(p_k); u = 1\right), \nonumber\\
    &\quad \forall~k\in\{0,\dots,K-1\},\label{eq: linearET1}\\[5pt]
    \hat{V}_{t, m}\left(P^{L(p_k, \omega)}(p_k); u = 1\right) &:= P^{L(p_k, \omega)}(p_k)B'+\beta P^{L(p_k, \omega)}(\omega)(\hat{V}_{t, m}(p_0),\dots,\hat{V}_{t, m}(p_{K-1}))', \nonumber\\
    &\quad \forall~k\in\{0,\dots,K-1\},\label{eq: linearET2}
\end{align}

where the value function $\hat{V}_{t, m}(x)$ denotes the infinite horizon value under~$\pi_t$, starting from a belief state~$x$. Then the subsidy~$m$ may be solved as an {\em approximated Whittle index} under the $t$-step threshold policy.
    \begin{eqnarray}
    &\hat{V}_{t, m}(\omega)& = \omega B' + \beta\omega(\hat{V}_{t, m}(p_0),\cdots,\hat{V}_{t, m}(p_{K-1}))' = m + \beta\hat{V}_{t, m}(P^1(\omega)).\label{eq:equal at threshold}
    \end{eqnarray}

For compactness, we write
$L(\cdot) := L(\cdot,\omega),\ f(\cdot) :=
\frac{1 - \beta^{L(\cdot)}}{1 - \beta},\ g(\omega) := \beta^{L(\omega)}P^{L(\omega)}
(\omega)$, and
$$F(P) := \left(
\begin{matrix}
	f(p_0)\\f(p_1)\\\cdots\\f(p_{K-1})
\end{matrix}
\right),\quad G(P) := \left(
\begin{matrix}
	g(p_0)\\g(p_1)\\\cdots\\g(p_{K-1})
\end{matrix}
\right),\quad H(P)=\biggl(\textbf{I}_K - \beta G(P)\biggr)^{-1}.$$

Under Theorem~3 by \cite{Liu2025}, the approximated Whittle index~$\hat{W_t}(\omega)$ for a belief state~$\omega$ is given by
\begin{equation}\label{eq: Approximated Whittle}
	\hat{W_t}(\omega) = \frac{\omega B' - \beta g(\omega P)\biggl[
		\textbf{I}_K + \beta H(P)G(P)
		\biggr]B' + \beta\omega H(P)G(P)B'}
	{1 + \beta f(\omega P) + \beta\biggl[\beta g(\omega P) - \omega\biggr]
		H(P)F(P)},
\end{equation}

Now we are ready to present the general algorithm to solve approximate Whittle index under the $t$-step threshold function, as detailed in Algorithm~\ref{alg:approx-index-computation}.

\begin{algorithm}
\caption{$t$-step approximate Whittle index}
\label{alg:approx-index-computation}
\begin{algorithmic}[1]
\Require primitives $(P,B,\beta)$, belief $\omega$, subsidy interval
$M=[0,1]$, lookahead depth $t$,
first-crossing-time search limit $\ell_{\max}$, bisection tolerance
$\varepsilon_{\mathrm{bis}}$, and maximum bisection iterations
$N_{\mathrm{bis}}$.

\State $m_{\mathrm{low}} \gets 0$, $m_{\mathrm{high}} \gets 1$.
\State Compute $r_{\mathrm{low}} \gets r_{t,m_{\mathrm{low}}}(\omega)$
      and $r_{\mathrm{high}} \gets r_{t,m_{\mathrm{high}}}(\omega)$.
\For{$i=1,\ldots,N_{\mathrm{bis}}$}
    \State $m_{\mathrm{mid}} \gets (m_{\mathrm{low}} + m_{\mathrm{high}})/2$.
    \State Compute $r_{\mathrm{mid}} \gets r_{t,m_{\mathrm{mid}}}(\omega)$.
    \If{$|r_{\mathrm{mid}}| \le \varepsilon_{\mathrm{bis}}$}
        \State $m_0 \gets m_{\mathrm{mid}}$.
        \State \textbf{break}.
    \EndIf
    \If{$r_{\mathrm{mid}} < 0$}
        \State $m_{\mathrm{high}} \gets m_{\mathrm{mid}}$.
        \State $r_{\mathrm{high}} \gets r_{\mathrm{mid}}$.
    \Else
        \State $m_{\mathrm{low}} \gets m_{\mathrm{mid}}$.
        \State $r_{\mathrm{low}} \gets r_{\mathrm{mid}}$.
    \EndIf
\EndFor

\For{$x\in\{p_0,\ldots,p_{K-1},\,P^1(\omega)\}$}
  \State
  \[
  L(x)\gets
  \min\{0\le k\le\ell_{\max}:
  r_{t,m_0}(P^k(x))>0\},
  \]
  or $\infty$ if none.

  \State
  \[
  f(x)\gets\frac{1-\beta^{L(x)}}{1-\beta},
  \qquad
  g(x)\gets\beta^{L(x)}P^{L(x)}(x),
  \]
  using $f(x)=1/(1-\beta)$ and $g(x)=0$ when $L(x)=\infty$.
\EndFor

\State Compute $F(P)$, $G(P)$ and $H(P)$

\State Compute
\begin{equation*}
	\hat{W_t}(\omega) = \frac{\omega B' - \beta g(\omega P)\biggl[
		\textbf{I}_K + \beta H(P)G(P)
		\biggr]B' + \beta\omega H(P)G(P)B'}
	{1 + \beta f(\omega P) + \beta\biggl[\beta g(\omega P) - \omega\biggr]
		H(P)F(P)},
\end{equation*}

\State \Return $\hat{W_t}(\omega)$.
\end{algorithmic}
\end{algorithm}

Having constructed the finite-$t$ approximate index, we next
study its relation to the exact Whittle index as the lookahead
depth~$t$ increases.

\section{Geometric convergence of the approximate Whittle index}
\label{sec:two-source-consistency}

This section studies the accuracy of the $t$-step approximate index compared with exact Whittle index.

\begin{lemma}[Decision-gap approximation]
\label{lem:lookahead-error-app}
For any fixed $m\in M$, as $t\rightarrow\infty$, we have
\[
\sup_{x\in\mathcal X}|r_{t,m}(x)-\Delta_{\beta,m}(x)|=O(\beta^t).
\]
\end{lemma}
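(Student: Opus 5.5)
The plan is to treat the finite-horizon recursion as value iteration for the Bellman operator of the single-arm belief MDP. We then transfer the sup-norm contraction bound on $J_{t-1}^{\beta,m}$ to the two action values, and hence to the gap $r_{t,m}$.

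First, define on bounded functions $v:\mathcal X\to\mathbb R$ the operator
\[
(Tv)(x)=\max\Bigl\{xB^\top+\beta\sum_{i}x_i v(p_i),\; m+\beta v(P^1(x))\Bigr\}.
\]
Then $J_h^{\beta,m}=T J_{h-1}^{\beta,m}$ with $J_0^{\beta,m}\equiv0$, and $V_{\beta,m}$ is the fixed point of $T$. Because $x$ is a probability vector, both branches are $\beta$-Lipschitz in the sup norm:
\[
\Bigl|\textstyle\sum_i x_i(v-w)(p_i)\Bigr|\le\|v-w\|_\infty,\qquad |v(P^1x)-w(P^1x)|\le\|v-w\|_\infty .
\]
Since $|\max\{a,b\}-\max\{c,d\}|\le\max\{|a-c|,|b-d|\}$, the operator $T$ is a $\beta$-contraction. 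For existence and boundedness of $V_{\beta,m}$, note that rewards lie in $[0,1]$ and $m\in[0,1]$. Hence $0\le V_{\beta,m}\le 1/(1-\beta)$, and the fixed point of $T$ is exactly the value defined in Section~\ref{sec3}.

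Second, iterate the contraction to get
\[
\|J_h^{\beta,m}-V_{\beta,m}\|_\infty\le\beta^h\|J_0^{\beta,m}-V_{\beta,m}\|_\infty\le\frac{\beta^h}{1-\beta}.
\]
The bound is uniform in $m\in M$, which is slightly stronger than the fixed-$m$ statement.

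Third, compare the action values directly. From the definitions,
\[
Q_t^{A,\beta,m}(x)-V_{\beta,m}(x;u=1)=\beta\sum_i x_i\bigl(J_{t-1}^{\beta,m}-V_{\beta,m}\bigr)(p_i),
\]
and
\[
Q_t^{P,\beta,m}(x)-V_{\beta,m}(x;u=0)=\beta\bigl(J_{t-1}^{\beta,m}-V_{\beta,m}\bigr)(P^1x).
\]
Each difference is bounded in absolute value by $\beta\cdot\beta^{t-1}/(1-\beta)$. Subtracting the two identities and using the triangle inequality gives
\[
\sup_x|r_{t,m}(x)-\Delta_{\beta,m}(x)|\le \frac{2\beta^t}{1-\beta}=O(\beta^t).
\]

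The only point needing care is the first step, namely that the recursive system in Section~\ref{sec3} has a unique bounded solution coinciding with $V_{\beta,m}$. The value function is stated implicitly there, so I would justify this by noting that it is the unique fixed point of the contraction $T$ on the Banach space of bounded functions on $\mathcal X$. Everything else is routine bookkeeping.
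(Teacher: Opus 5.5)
Your proposal is correct and follows the paper's proof essentially step for step. The paper also treats $J_h^{\beta,m}$ as value iteration for a $\beta$-contraction Bellman operator, obtains $\|J_{t-1}^{\beta,m}-V_{\beta,m}\|_\infty\le\beta^{t-1}/(1-\beta)$, bounds each of the active and passive action-value errors by $\beta^t/(1-\beta)$, and combines them to get the same constant $2\beta^t/(1-\beta)$. Your observation that this bound is uniform in $m\in M$ is a valid minor strengthening.
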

\begin{proof}
See Appendix~\ref{app:two-source-consistency}.
\end{proof}

\begin{assumption}[Exact indexability]
\label{ass:strict-indexability}
We assume the problem is exactly indexable, i.e., it satisfies the original Whittle indexability with a unique Whittle index function~\citep{Whittle1988}.
\end{assumption}

Under this assumption, fix a belief $\omega\in \X$ and write
\[m^\star:=W(\omega).\]
We next use this approximation to bound the performance loss of the induced $t$-step threshold policy relative to the optimal policy for the infinite horizon. Recall that~$V_{\beta,m}$ is the infinite-horizon value, $\hat{V}_{t, m}$ is the infinite-horizon value under~$\pi_t$. For simplicity, Let~$V^\star$ denote~$V_{\beta,m}$ and~$\hat{V}_{t}$ denote~$\hat{V}_{t, m}$.

\begin{lemma}[Value-function gap]
\label{lem:value-gap-app}
Under Assumption~\ref{ass:strict-indexability}, at subsidy~$m^\star$, and as $t\rightarrow\infty$, we have \[ \|V^\star-\hat{V}_{t}\|_\infty=O(\beta^t).\]
\end{lemma}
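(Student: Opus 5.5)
We would prove the bound with the standard performance-difference decomposition for a policy that is greedy with respect to a surrogate advantage. Throughout, $m=m^\star$ is fixed, and $\pi_t$ denotes the threshold policy built from $r_{t,m^\star}$ (with the anchoring belief $\omega$ lying on $\{r_{t,m}=0\}$ up to the bisection tolerance). We view $\hat V_t$ as the fixed point of the policy-evaluation operator $T_{\pi_t}$. Here $T_{\pi}V(x)$ equals $xB^\top+\beta x(V(p_0),\dots,V(p_{K-1}))'$ if $\pi(x)=1$, and $m+\beta V(P^1(x))$ otherwise. This is exactly what the first-crossing system \eqref{eq: linearET1}--\eqref{eq: linearET2} encodes, since unrolling $T_{\pi_t}$ along the passive trajectory until $L_t$ reproduces those equations. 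Also, $V^\star$ is the fixed point of the Bellman optimality operator $T$. Both operators are $\beta$-contractions in $\|\cdot\|_\infty$, and all values lie in $[0,1/(1-\beta)]$ because $B,m\in[0,1]$.

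\textbf{Step 1 (one-step suboptimality of $\pi_t$).} At each $x$ we compare the action chosen by $\pi_t$ with the optimal one under the exact gap $\Delta_{\beta,m}$. If $\pi_t(x)=1$ while passivity is optimal, then $r_{t,m}(x)>0\ge \Delta_{\beta,m}(x)$. Lemma~\ref{lem:lookahead-error-app} then gives $0\le -\Delta_{\beta,m}(x)\le |r_{t,m}(x)-\Delta_{\beta,m}(x)|\le \epsilon_t$, where $\epsilon_t:=\sup_x|r_{t,m}-\Delta_{\beta,m}|=O(\beta^t)$. The symmetric case, $\pi_t(x)=0$ while activation is optimal, gives $0\le\Delta_{\beta,m}(x)\le\epsilon_t$ in the same way. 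Hence
\[
0\le (TV^\star)(x)-(T_{\pi_t}V^\star)(x)=\bigl|\Delta_{\beta,m}(x)\bigr|\,\mathbf 1\{\pi_t(x)\ne\pi^\star(x)\}\le \epsilon_t \quad\text{for all } x\in\X.
\]

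\textbf{Step 2 (propagate via contraction).} We write
\[
V^\star-\hat V_t = (TV^\star - T_{\pi_t}V^\star) + (T_{\pi_t}V^\star - T_{\pi_t}\hat V_t).
\]
Taking sup-norms gives $\|V^\star-\hat V_t\|_\infty\le \epsilon_t+\beta\|V^\star-\hat V_t\|_\infty$, so
\[
\|V^\star-\hat V_t\|_\infty\le \frac{\epsilon_t}{1-\beta}=O(\beta^t),
\]
because $\beta$ is fixed.

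\textbf{Main obstacle.} The delicate point is justifying that $\hat V_t$, as defined through the first-crossing equations \eqref{eq: linearET1}--\eqref{eq: linearET2}, really is the value of the stationary policy $\pi_t$ on all of $\X$, not only at $p_0,\dots,p_{K-1}$. This includes the case $L_t=\infty$, handled by the convention $f=1/(1-\beta)$, $g=0$. We need this identification before we can use $\hat V_t=T_{\pi_t}\hat V_t$. We would first extend the definition to every $x$ by the same first-crossing formula and verify the fixed-point identity directly. This requires the matrix $I_K-\beta G(P)$ to be invertible, which holds because $\|\beta G(P)\|_\infty\le\beta<1$.

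A secondary subtlety is that $\pi_t$ is built at the bisection root $m_t$ rather than at $m^\star$. The lemma as stated evaluates at $m^\star$, so we would fix $m=m^\star$ in the definition of $r_{t,m}$ and defer the discrepancy $|m_t-m^\star|$ to the subsequent index-convergence argument. Indexability (Assumption~\ref{ass:strict-indexability}) is used only to make $m^\star$ well defined. If needed, the ties $\Delta_{\beta,m}(x)=0$ are harmless, since they contribute zero to the Step 1 bound.
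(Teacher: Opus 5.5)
There is a genuine gap: you prove the bound for a different policy. In the paper, $\pi_t$ is fixed as $\pi_t(x)=\mathbf 1\{r_{t,m_t}(x)>0\}$, where $m_t$ is the root of $r_{t,m}(\omega)=0$. $\hat V_t$ is the value of \emph{that} policy evaluated at subsidy $m^\star$. It is also the policy whose first-crossing times enter the linear system for $\widehat W_t$, so Theorem~\ref{thm:geometric-convergence} needs the lemma for it, and the discrepancy cannot be deferred there. You replace the threshold with that of $r_{t,m^\star}$, so your Step~1 only controls $\sup_x|r_{t,m^\star}(x)-\Delta_{\beta,m^\star}(x)|$. The quantity that governs the mis-actions of the actual $\pi_t$ is instead
\[
\epsilon_t':=\sup_{x\in\X}\bigl|r_{t,m_t}(x)-\Delta_{\beta,m^\star}(x)\bigr| .
\]
Without a rate for $|m_t-m^\star|$, the actual $\pi_t$ may act suboptimally at beliefs where $|\Delta_{\beta,m^\star}|$ is of order $|m_t-m^\star|$. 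Your argument then yields only $\|V^\star-\hat V_t\|_\infty=O(|m_t-m^\star|+\beta^t)$. Relatedly, $\omega$ does not lie on $\{r_{t,m^\star}=0\}$ up to the bisection tolerance; Lemma~\ref{lem:lookahead-error-app} only gives $|r_{t,m^\star}(\omega)|\le 2\beta^t/(1-\beta)$. Your remark that indexability serves only to define $m^\star$ is the symptom: in the paper it is what pins $m_t$ to $m^\star$.

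The missing step is the core of the paper's proof. Set $v_t(m)=-r_{t,m}(\omega)$ and $v^\star(m)=-\Delta_{\beta,m}(\omega)$. The paper takes both to be locally invertible at $0$ with Lipschitz inverses, with the constant $C_1$ for $v_t^{-1}$ uniform in large $t$; this is a nondegeneracy condition invoked alongside indexability. Since $v^\star(m^\star)=0$ and $|v_t(m^\star)|=O(\beta^t)$ by Lemma~\ref{lem:lookahead-error-app}, this gives $|m_t-m^\star|=|v_t^{-1}(0)-v_t^{-1}(v_t(m^\star))|\le C_1|v_t(m^\star)|=O(\beta^t)$. The paper then uses Lipschitz continuity of $r_{t,m}$ in $m$, uniformly in $x$ and $t$. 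You can check this directly: $\|J_h^{\beta,m}-J_h^{\beta,m'}\|_\infty\le|m-m'|/(1-\beta)$ gives $|r_{t,m}(x)-r_{t,m'}(x)|\le\frac{1+\beta}{1-\beta}|m-m'|$. Together these give $\epsilon_t'\le\frac{1+\beta}{1-\beta}|m_t-m^\star|+\frac{2\beta^t}{1-\beta}=O(\beta^t)$. With $\epsilon_t'$ in place of your $\epsilon_t$, your Steps~1--2 go through verbatim and are a clean substitute for the paper's performance-difference-lemma summation. Your ``main obstacle'' is not a real one: $\hat V_{t,m}$ is by definition the value of the stationary policy $\pi_t$, hence the fixed point of $T_{\pi_t}$.
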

\begin{proof}
By the performance difference lemma \citep[Lemma~6.1]{KakadeLangford2002}, for any starting belief $x_0$,
\[
\hat{V}_{t}(x_0)-V^\star(x_0)
=\mathbb E_{\tau\sim\pi_t}\Big[\sum_{k=0}^\infty\beta^k A^{\pi^\star}(x_k,u_k)\Big],
\]
where the trajectory $\tau=(x_0,u_0,x_1,u_1,\ldots)$ is generated by $\pi_t$, and $A^{\pi^\star}(x,u)=Q^\star(x,u)-V^\star(x)$ is the advantage under the optimal value function $V^\star$ and $Q^\star(x,u)$ the $Q$-function over the infinite horizon.

Along any trajectory generated by $\pi_t$, the action at $x_k$ is $u_k=\pi_t(x_k)$. There are two cases:
\begin{itemize}
\item If $\pi_t(x_k)=\pi^\star(x_k)$, then $u_k$ is optimal and $A^{\pi^\star}(x_k,u_k)=0$.
\item If $\pi_t(x_k)\ne\pi^\star(x_k)$, we have $|A^{\pi^\star}(x_k,u_k)|=|\Delta_{\beta,m^\star}(x_k)|.$ 

Fix a belief $\omega$ and define
\begin{align*}
v_t(m)
&:=
m+\beta J_{t-1}^{\beta,m}(\omega P)
-\omega B^\top
-\beta\omega
\bigl(
J_{t-1}^{\beta,m}(p_0),\ldots,
J_{t-1}^{\beta,m}(p_{K-1})
\bigr)^\top,\\
v^\star(m)
&:=
m+\beta V_{\beta,m}(\omega P)
-\omega B^\top
-\beta\omega
\bigl(
V_{\beta,m}(p_0),\ldots,
V_{\beta,m}(p_{K-1})
\bigr)^\top.
\end{align*}

Both $v_t$ and $v^\star$ are locally invertible with Lipschitz-continuous inverse functions at~$0$, where the
Lipschitz-constant of $v_t^{-1}$ at~$0$ can be made independent of~$t$ for all sufficiently
large~$t$ due to Lemma~\ref{lem:lookahead-error-app}.\footnote{For a detailed discussion on relating the derivative of a value function to passive times, see
\citet{Liu2025}. For simplicity, we do not consider the degenerate case where the linear system under the optimal policy is singular.}
Since $m^\star=W(\omega)$ is the exact Whittle index and~$m_t$ the finite-horizon indifference subsidy, we have
\[
v^\star(m^\star)=0,
\qquad
m^\star=(v^\star)^{-1}(0),
\qquad
m_t=v_t^{-1}(0).
\]

By Lemma~\ref{lem:lookahead-error-app},
\[
|v_t(m^\star)-v^\star(m^\star)|=O(\beta^t).
\]
Since $v^\star(m^\star)=0$, setting $e_t:=v_t(m^\star)$ gives
\[
e_t=O(\beta^t).
\]
Consequently,
\[
v_t^{-1}(e_t)
=
m^\star
=
(v^\star)^{-1}(0),
\qquad
v_t^{-1}(0)=m_t.
\]
Using the local Lipschitz continuity of $v_t^{-1}$ with Lipschitz-constant $C_1>0$ independent of $t$, we obtain
\[
\begin{aligned}
|m_t-m^\star|
&=
|v_t^{-1}(0)-v_t^{-1}(e_t)|\\
&\le C_1|e_t|\\
&=O(\beta^t).
\end{aligned}
\]

By the definition of $r_{t,m}$ and Lemma~1 in \citep{Liu2025}, there exists a constant
$C_2>0$, independent of $x_k$ and $t$, such that
\[
|r_{t,m_t}(x_k)-r_{t,m^\star}(x_k)|
\le C_2|m_t-m^\star|
=
O(\beta^t),
\]

By Lemma~\ref{lem:lookahead-error-app},
\[
|\Delta_{\beta,m^\star}(x_k)-r_{t,m^\star}(x_k)|
=
O(\beta^t).
\]
Combining this estimate with the preceding bound gives
\[
\Delta_{\beta,m^\star}(x_k)
=
r_{t,m^\star}(x_k)+O(\beta^t)
=
r_{t,m_t}(x_k)+O(\beta^t).
\]
Consequently,
\[
|\Delta_{\beta,m^\star}(x_k)-r_{t,m_t}(x_k)|
=
O(\beta^t).
\]
The case we are considering is $\pi_t(x_k)\ne\pi^\star(x_k)$.
If $\pi_t(x_k)=1$ and $\pi^\star(x_k)=0$, then
\[
r_{t,m_t}(x_k)>0,
\qquad
\Delta_{\beta,m^\star}(x_k)\le0.
\]
If $\pi_t(x_k)=0$ and $\pi^\star(x_k)=1$, then
\[
r_{t,m_t}(x_k)\le0,
\qquad
\Delta_{\beta,m^\star}(x_k)>0.
\]
Thus, the two terms have opposite signs and we have $|\Delta_{\beta,m^\star}(x_k)|=O(\beta^t)$.
\end{itemize}

Therefore $|A^{\pi^\star}(x_k,u_k)|=O(\beta^t)$ and
\[
|\hat{V}_{t}(x_0)-V^\star(x_0)|
\le \mathbb E_{\tau\sim\pi_t}\Big[\sum_{k=0}^\infty\beta^k\cdot O(\beta^t)]
=O(\beta^t).
\]
Taking the supremum over~$x_0$ gives the claim.
\end{proof}

The preceding lemma shows that the $t$-step action value gap has an $O(\beta^t)$ residual when evaluated at the exact
Whittle subsidy. It remains to translate this bound into the distance between the $t$-step approximate index and exact Whittle index. We now state the main convergence result.

\begin{theorem}[Geometric convergence]
\label{thm:geometric-convergence}
Under Assumption~\ref{ass:strict-indexability}, as $t\rightarrow\infty$, we have
\[
  \bigl|\hat{W_t}(\omega)-W(\omega)\bigr|
  =
  O(\beta^t).
\]
\end{theorem}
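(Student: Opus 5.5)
The plan is to view $\hat W_t(\omega)$ as the subsidy that makes $\omega$ indifferent under the \emph{policy-evaluation} system \eqref{eq: linearET1}--\eqref{eq:equal at threshold} for $\pi_t$. Then compare it with $W(\omega)$, which is the subsidy making $\omega$ indifferent under the optimal value $V_{\beta,m}$. Concretely, I would define the policy-evaluation gap
\[
\hat v_t(m):=m+\beta\hat V_{t,m}(\omega P)-\omega B^\top-\beta\omega\bigl(\hat V_{t,m}(p_0),\ldots,\hat V_{t,m}(p_{K-1})\bigr)^\top .
\]
By \eqref{eq:equal at threshold} and Theorem~3 of \citet{Liu2025}, $\hat W_t(\omega)$ is the root of $\hat v_t$, once the crossing times are frozen at those induced by $m_t$. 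The closed form \eqref{eq: Approximated Whittle} is just the explicit solution of this linear equation in $m$. The exact counterpart is $v^\star(m)$ from the proof of Lemma~\ref{lem:value-gap-app}, with $v^\star(m^\star)=0$ and $m^\star=W(\omega)$.

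First, evaluate $\hat v_t$ at $m^\star$. Write $\hat v_t(m^\star)=\hat v_t(m^\star)-v^\star(m^\star)$. This difference is a combination, with coefficients bounded by $\beta$, of the differences $\hat V_t-V^\star$ at the finitely many points $\omega P,p_0,\ldots,p_{K-1}$. Lemma~\ref{lem:value-gap-app} bounds these by $\|V^\star-\hat V_t\|_\infty=O(\beta^t)$, so $|\hat v_t(m^\star)|=O(\beta^t)$. Here one must be careful that the crossing times (hence $F,G,H$) in $\hat V_t$ are those of $\pi_t$ built at $m_t$. Lemma~\ref{lem:value-gap-app} is stated exactly for this policy evaluated at subsidy $m^\star$, so the comparison is legitimate.

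Second, invert. With the crossing times frozen, $\hat v_t$ is affine in $m$. Its slope is the denominator of \eqref{eq: Approximated Whittle},
\[
D_t:=1+\beta f(\omega P)+\beta\bigl[\beta g(\omega P)-\omega\bigr]H(P)F(P).
\]
The slope equals one minus the difference of the discounted passive times from $\omega P$ and under the active branch. Hence $|\hat W_t(\omega)-m^\star|=|\hat v_t(m^\star)|/|D_t|$. It remains to bound $|D_t|$ below uniformly in $t$.

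The main obstacle is this uniform lower bound on $|D_t|$. Since crossing times are integer-valued, for large $t$ I expect the crossing times under $\pi_t$ to coincide with those under the optimal policy at $m^\star$. The reason is that $r_{t,m_t}$ is uniformly $O(\beta^t)$-close to $\Delta_{\beta,m^\star}$, as shown inside the proof of Lemma~\ref{lem:value-gap-app}. So, away from beliefs whose passive orbit lands exactly on $C(m^\star)$, the finitely many relevant orbits cross at the same step. If that holds, $D_t$ equals the exact derivative of $v^\star$ at $m^\star$. By the nondegeneracy assumption already used in Lemma~\ref{lem:value-gap-app} (nonsingular linear system, locally Lipschitz inverse of $v^\star$), this derivative is nonzero. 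Therefore $|D_t|\ge c>0$ for all large $t$, giving $|\hat W_t(\omega)-W(\omega)|\le c^{-1}O(\beta^t)=O(\beta^t)$. If exact agreement of crossing times fails, I would instead use the $t$-independent Lipschitz constant $C_1$ from Lemma~\ref{lem:value-gap-app} as a surrogate lower bound on the slope.
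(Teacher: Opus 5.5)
Your proposal follows the paper's argument. The paper also compares the policy-evaluation indifference equation defining $\widehat W_t(\omega)$ with the exact one defining $W(\omega)$. It uses Lemma~\ref{lem:value-gap-app} to make the residual at $m^\star$ of order $\beta^t$, then inverts with a $t$-uniform Lipschitz constant (``adapting its proof''). Your explicit observation is a genuine sharpening. With the crossing times frozen, the equation is affine in $m$, and its slope is exactly the denominator $D_t=1+\beta f(\omega P)+\beta[\beta g(\omega P)-\omega]H(P)F(P)$. So everything reduces to a uniform lower bound on $\abs{D_t}$, which the paper assumes rather than proves.

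Your fallback does not work. The constant $C_1$ in Lemma~\ref{lem:value-gap-app} controls the inverse of the finite-horizon gap $m\mapsto -r_{t,m}(\omega)$, not of your $\hat v_t$. So the bound on $\abs{D_t}$ has to come from your crossing-time stabilization argument. That argument gives $D_t\to D^\star\neq0$ when no point of the passive orbits of $p_0,\ldots,p_{K-1},\omega P$ lies on $C(m^\star)$. This nondegeneracy is an extra hypothesis, though the paper also relies on it implicitly.
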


\begin{proof}
Fix~$\omega$ and let~$m_t$ be the subsidy used to construct~$\pi_t$. Write $m_0=\widehat W_t(\omega)$ and $m^\star=W(\omega)$. According to \eqref{eq:equal at threshold}, the approximate Whittle index is solved as the root of
\begin{equation*}
m_0+\beta\hat V_{t,m_0}(\omega P)-\omega B^\top-\beta\omega
\bigl(
\hat V_{t,m_0}(p_0),\ldots,
\hat V_{t,m_0}(p_{K-1})
\bigr)^\top
=0
\end{equation*}
Meanwhile, the exact Whittle index is the root of
\begin{equation*}
m^\star+\beta V_{\beta,m^\star}(\omega P)-\omega B^\top
-\beta\omega
\bigl(
V_{\beta,m^\star}(p_0),\ldots,
V_{\beta,m^\star}(p_{K-1})
\bigr)^\top
=0
\end{equation*}
By Lemma~\ref{lem:value-gap-app} and adapting its proof, we obtain
\begin{equation*}
|m_0 - m^\star| = O(\beta^t),
\end{equation*}
which completes the proof.
\end{proof}

The preceding result establishes the geometric convergence of the $t$-step approximate Whittle index under Assumption~\ref{ass:strict-indexability}. The implementation of Algorithm~\ref{alg:approx-index-computation} in Section~\ref{sec4}, however, does not require indexability as an input. In all our simulations, Algorithm~\ref{alg:approx-index-computation} successfully found an approximate index, and the resulting policy achieved good performance.

\section{Verification of indexability}

Algorithm~\ref{alg:approx-index-computation} uses bisection to locate m such that $r_{t,m}(\omega)=0$, which substantially accelerates the computation. However, it cannot determine whether additional admissible subsidies exist and therefore cannot verify indexability. To address this limitation, we first establish a theory to verify indexability and then develop an algorithm to numerically verify indexability.

For each belief~$\omega$ and lookahead depth~$t$, define the
admissible subsidy set by
\begin{equation}
\mathcal M_t(\omega)
\coloneqq
\left\{
m\in M:
r_{t,m}(\omega)=0
\right\}.
\label{eq:admissible-subsidy-set}
\end{equation}

\begin{theorem}[Indexability verification]
\label{thm:indexability-verification}
The arm is indexable if and only if, for every belief~$\omega$, all admissible subsidies $m\in\mathcal M_t(\omega)$ obtained at depth $t$ converge to
the same unique subsidy~$m$ as $t\to\infty$.
\end{theorem}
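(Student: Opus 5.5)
The plan is to reduce the statement to a property of the exact root set
\[
\mathcal M_\infty(\omega):=\{m\in M:\Delta_{\beta,m}(\omega)=0\},
\]
using Lemma~\ref{lem:lookahead-error-app} to pass between depth $t$ and the exact problem. Two structural facts drive the argument. First, $m\mapsto \Delta_{\beta,m}(\omega)$ and $m\mapsto r_{t,m}(\omega)$ are continuous on $M$; indeed they are Lipschitz, with constant bounded by $2/(1-\beta)$, because $J_h^{\beta,m}$ and $V_{\beta,m}$ are maxima of functions affine in $m$ with slopes in $[0,1/(1-\beta)]$. Second, the $O(\beta^t)$ bound of Lemma~\ref{lem:lookahead-error-app} can be made uniform in $m\in M$. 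The constant there comes from $\|J_t^{\beta,m}-V_{\beta,m}\|_\infty\le \beta^t\|V_{\beta,m}\|_\infty$, and $\|V_{\beta,m}\|_\infty\le 1/(1-\beta)$ for every $m\in[0,1]$. I would record this uniformity explicitly at the start, since everything below depends on it.

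\emph{Necessity.} Assume exact indexability and let $m^\star=W(\omega)$. On $[0,m^\star)$ we have $\Delta_{\beta,m}(\omega)\ge0$, and on $(m^\star,1]$ we have $\Delta_{\beta,m}(\omega)\le0$. Uniqueness in the definition should exclude any other zero: a flat zero piece would make $W(\omega)$ non-unique. So I take $\mathcal M_\infty(\omega)=\{m^\star\}$. Now let $m_t\in\mathcal M_t(\omega)$ be arbitrary. By uniform Lemma~\ref{lem:lookahead-error-app},
\[
|\Delta_{\beta,m_t}(\omega)|=|\Delta_{\beta,m_t}(\omega)-r_{t,m_t}(\omega)|\le C\beta^t .
\]
Take any subsequential limit $\bar m$ of $(m_t)$; compactness of $M$ guarantees one exists. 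Continuity gives $\Delta_{\beta,\bar m}(\omega)=0$, hence $\bar m=m^\star$. Therefore every admissible sequence converges to $m^\star$, which is the claim.

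\emph{Sufficiency.} Suppose that for every $\omega$ all admissible subsidies converge to a common limit $m(\omega)$. I first show $\mathcal M_\infty(\omega)\subseteq\{m(\omega)\}$. The key step, and the one I expect to be the main obstacle, is a stability or persistence argument. Given $\tilde m$ with $\Delta_{\beta,\tilde m}(\omega)=0$, I need depth-$t$ roots $m_t\in\mathcal M_t(\omega)$ with $m_t\to\tilde m$. If $\Delta_{\beta,\cdot}(\omega)$ changes sign at $\tilde m$, the intermediate value theorem applied to $r_{t,\cdot}(\omega)$ on a small interval around $\tilde m$ produces such roots for large $t$. Tangential zeros are the delicate case. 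I would handle them either by assuming strict sign change, in the nondegenerate spirit of the footnote in Lemma~\ref{lem:value-gap-app}, or by using the local invertibility of $v^\star$ invoked there, which forces a transversal crossing.

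Existence of some exact root follows from the endpoint signs. At $m=0$ we have $\Delta_{\beta,0}\ge0$, since passivity earns nothing and reward is nonnegative. At $m=1$ we have $\Delta_{\beta,1}\le0$, since $B\le1$. So $\mathcal M_\infty(\omega)=\{m(\omega)\}$.

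Finally, a unique zero together with these endpoint signs and continuity yields the indexability dichotomy: $\Delta_{\beta,m}(\omega)\ge0$ for $m\le m(\omega)$ and $\le0$ for $m\ge m(\omega)$. Hence $W(\omega)=m(\omega)$ and the arm is indexable. The same non-degeneracy issue reappears here: a tangential zero would still allow a sign-consistent ordering, but one must rule out a sign reversal on the two sides. I would state that non-degeneracy as the standing convention rather than attempt to remove it.
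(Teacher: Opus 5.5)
\textbf{Verdict: genuine but repairable gap in the necessity direction.}

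The gap is at your inference ``uniqueness in the definition should exclude any other zero.'' The paper's definition of exact indexability only asks for a unique $W(\omega)$ satisfying the sign dichotomy. That excludes a flat zero interval, but not an isolated zero where $\Delta_{\beta,\cdot}(\omega)$ touches $0$ without changing sign. Suppose $\Delta_{\beta,a}(\omega)=0$ for some $a\in(0,m^\star)$ while $\Delta_{\beta,m}(\omega)>0$ on a punctured neighbourhood of $a$. Then $m^\star$ is still the only subsidy satisfying the dichotomy, yet the exact root set is $\{a,m^\star\}$. Your compactness argument then only places subsequential limits of admissible subsidies in this two-point set.

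This is not just a weakness of your argument. The error $r_{t,a}(\omega)-\Delta_{\beta,a}(\omega)=\beta\bigl[\sum_i\omega_i(J_{t-1}^{\beta,a}-V_{\beta,a})(p_i)-(J_{t-1}^{\beta,a}-V_{\beta,a})(\omega P)\bigr]$ is a difference of two nonpositive terms, so it has no definite sign. Nothing in the model prevents $r_{t,a}(\omega)<0$ for infinitely many $t$. In that case, the $m$-uniform bound of Lemma~\ref{lem:lookahead-error-app} and the intermediate value theorem on $[a-\delta,a]$ produce points of $\mathcal M_t(\omega)$ accumulating at $a\neq W(\omega)$. So the ``only if'' direction genuinely needs the non-degeneracy you reserve for sufficiency.

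With that convention the repair is short. If every zero of $m\mapsto\Delta_{\beta,m}(\omega)$ is a strict crossing, a second zero together with $\Delta_{\beta,0}(\omega)\ge0\ge\Delta_{\beta,1}(\omega)$ forces a stretch where $\Delta_{\beta,\cdot}(\omega)<0$ followed by one where it is $>0$. No $W(\omega)$ can accommodate that, so indexability gives a single zero, and your compactness argument finishes the direction.

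Relatedly, only your first option for sufficiency is legitimate. The local invertibility of $v^\star$ in Lemma~\ref{lem:value-gap-app} is posited under Assumption~\ref{ass:strict-indexability} and only at $m^\star=W(\omega)$, which is exactly what you are trying to establish. So transversality must be assumed directly for all zeros.

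With that repair your proof has the same skeleton as the paper's and is more complete. The paper argues only the ``if'' direction, by contrapositive: non-indexability gives two distinct exact roots, each persists in $\mathcal M_t(\omega)$ via the estimates from the proofs of Lemmas~\ref{lem:lookahead-error-app} and~\ref{lem:value-gap-app}, so $\mathcal M_t(\omega)$ has diameter bounded below. The converse is dismissed as ``similar.'' Your intermediate-value persistence needs only continuity in $m$ and the $m$-uniform bound $2\beta^t/(1-\beta)$, rather than $t$-uniform Lipschitz inverses. Your compactness argument also supplies the direction the paper omits.

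Three smaller points:
\begin{enumerate}
\item $\Delta_{\beta,0}(\omega)\ge0$ does not follow from ``passivity earns nothing,'' because the continuations differ. You need $\sum_i\omega_iV_{\beta,0}(p_i)\ge V_{\beta,0}(\omega P)$. This follows from Jensen, using $\sum_i\omega_ip_i=\omega P$ and convexity of the value in the belief (each $J_h^{\beta,m}$ is a maximum of a linear and a convex function).
\item The same argument, with $J_h^{\beta,1}\equiv(1-\beta^h)/(1-\beta)$, gives $r_{t,0}(\omega)\ge0\ge r_{t,1}(\omega)$ and hence $\mathcal M_t(\omega)\neq\emptyset$. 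Your necessity step uses this tacitly.
\item Your closing worry about a sign reversal is unfounded. Once the zero is unique, the endpoint signs and continuity already give the dichotomy, as your own last step shows.
\end{enumerate}
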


\begin{proof}
Suppose that the arm is not indexable.
Then there exists at least one belief $\omega$ associated
with two distinct exact subsidies
\[
m_1^\star\neq m_2^\star.
\]
Let
\[
d\coloneqq |m_1^\star-m_2^\star|>0
\]
and choose
\[
0<\varepsilon<\frac{d}{3}.
\]

By the proofs of Lemma~\ref{lem:lookahead-error-app} and Lemma~\ref{lem:value-gap-app}, there exists an integer $t_\varepsilon$ such
that, for every $t\geq t_\varepsilon$, the admissible solution set
$\mathcal M_t(\omega)$ contains two solutions~$m_{1,t}$ and~$m_{2,t}$
satisfying
\[
|m_{1,t}-m_1^\star|<\varepsilon,
\qquad
|m_{2,t}-m_2^\star|<\varepsilon.
\]

By the reverse triangle inequality,
\begin{align}
|m_{1,t}-m_{2,t}|
&\geq
|m_1^\star-m_2^\star|
-
|m_{1,t}-m_1^\star|
-
|m_{2,t}-m_2^\star| \\
&>
d-2\varepsilon
>
\frac{d}{3}.
\end{align}
Consequently,
\[
\operatorname{diam}\bigl(\mathcal M_t(\omega)\bigr)
\geq
|m_{1,t}-m_{2,t}|
>
\frac{d}{3},
\qquad
t\geq t_\varepsilon.
\]
Hence, there exist two subsequences~$m_{1,t_k}$ and~$m_{2,t_k}$ in $\mathcal{M}_t(\omega)$ converging to two distinct limits~$m_1^*$ and~$m_2^*$. The other direction is similar.
\end{proof}

\begin{algorithm}[h]
\caption{Indexability verification}
\label{alg:indexverification}
\begin{algorithmic}[1]
\Require primitives $(P,B,\beta)$, belief $\omega$, subsidy interval
$M$, maximum lookahead depth $t_{\max}$, subsidy-grid resolution
$\Delta_m$, tolerance $\varepsilon_{\mathrm{ind}}$ and $\varepsilon_{\mathrm{m}}$.

\State Discretize $M$ with resolution $\Delta_m$ to get grid $M_{\Delta_m}$.

\State $\texttt{status}\gets\texttt{not indexable}$.

\For{$t=1,\ldots,t_{\max}$}
    \For{$m\in M_{\Delta_m}$}
        \State Compute
        \[
        r_{t,m}(\omega)
        =
        Q_t^{A,\beta,m}(\omega)
        -
        Q_t^{P,\beta,m}(\omega).
        \]
    \EndFor
    \State Collect all m such that $r_{t,m}(\omega)<\varepsilon_{\mathrm{m}}$ into $\mathcal M_t(\omega)$.
\EndFor

\State Find the smallest $t_n\in\{1,\ldots,t_{\max}-1\}$ such that:
\[
\quad \forall t\in \{t_n,\ldots,t_{\max}\},
\]
and for each such $t$,
\[
|\mathcal M_t(\omega)|=1
\quad\text{or}\quad
\max_{\substack{m,m'\in\mathcal M_t(\omega)\\ m\neq m'}} |m-m'| \le \varepsilon_{\mathrm{ind}}.
\]

\If{such a $t_n$ exists}
    \State $\texttt{status}\gets\texttt{indexable}$.
\EndIf

\State \Return $\texttt{status}$.
\end{algorithmic}
\end{algorithm}

\begin{remark}
Algorithm~\ref{alg:indexverification} discretizes the subsidy interval and searches for all admissible subsidies. It can therefore numerically verify indexability by checking whether, for every belief $\omega$, all subsidies in $\mathcal M_t(\omega)$ converge to the same unique subsidy as $t$ increases.
\end{remark}

\section{Numerical experiments}

The experiments evaluate the accuracy, indexability, policy performance,
and computational cost of the finite-lookahead approximate Whittle index
in partially observable restless bandits. All $2,715$ tested instances are
numerically verified with computable index functions according to the proposed algorithmic framework.

\subsection{Index convergence in $t$}

\begin{table}[H]
\caption{Errors between \(t\)-step approximate Whittle indices and the
high-depth reference \(W^{\rm ref}\) on \(K=3\) samples. Errors are
\(\abs{\widehat W_t-W^{\rm ref}}\).}
\label{tab:depth-concentration}
\centering
\small
\begin{tabular}{rrrrr}
\toprule
\(t\) & median & P90 & P95 & max \\
\midrule
1  & \(3.44{\times}10^{-3}\) & \(1.37{\times}10^{-2}\) & \(2.18{\times}10^{-2}\) & \(6.27{\times}10^{-2}\) \\
2  & \(1.72{\times}10^{-3}\) & \(7.98{\times}10^{-3}\) & \(1.12{\times}10^{-2}\) & \(2.13{\times}10^{-2}\) \\
3  & \(6.16{\times}10^{-4}\) & \(5.87{\times}10^{-3}\) & \(9.13{\times}10^{-3}\) & \(1.18{\times}10^{-2}\) \\
5  & \(7.99{\times}10^{-15}\) & \(2.89{\times}10^{-3}\) & \(4.15{\times}10^{-3}\) & \(1.17{\times}10^{-2}\) \\
8  & \(0\) & \(5.01{\times}10^{-4}\) & \(8.93{\times}10^{-4}\) & \(2.79{\times}10^{-3}\) \\
\bottomrule
\end{tabular}
\end{table}

Table~\ref{tab:depth-concentration} shows that increasing the
lookahead depth substantially reduces the error relative to a
high-depth reference. Because the exact Whittle indices are not
available for these randomly generated instances, we use
\[
W^{\rm ref}(\omega):=\widehat W_{15}(\omega).
\]
The P95 error decreases from \(2.18\times10^{-2}\) at \(t=1\) to
\(8.93\times10^{-4}\) at \(t=8\), while the maximum error decreases
from \(6.27\times10^{-2}\) to \(2.79\times10^{-3}\).

\paragraph{Ranking stability at a high discount factor.}
The lookahead-error bound becomes increasingly conservative as
\(\beta\uparrow1\), suggesting that a larger \(t\) may be required
to obtain a highly precise index value. In an index policy, however,
the indices are used to rank the arms, so \(t\) only needs to be
large enough to stabilize the relevant ranking. We illustrate this
point using a three-arm instance with
\[
\beta=0.9999,\qquad
B=\left(0,\frac{1}{2},1\right),
\]
and initial beliefs
\[
\omega_1=\omega_2=(0.1,0.3,0.6),
\qquad
\omega_3=(0.2,0.6,0.2).
\]
All transition probabilities are strictly positive, and each arm
satisfies
\[
P_i^2=\mathbf{1}\pi_i,
\]
where \(\mathbf{1}=(1,1,1)^\top\) is the all-ones column vector and
\(\pi_i\) is the stationary row distribution of arm \(i\). Hence, for
every initial belief \(\omega\),
\[
\omega P_i^2=\pi_i.
\]
Therefore, all passive belief trajectories merge after two
transitions, and the corresponding belief MDPs have finite closures
of only five or six states. We compute the exact Whittle indices using
the finite-state method of \citet{GastGaujalKhun2023}:
\[
(W_1,W_2,W_3)
=
\left(
\frac{667771}{875548},
\frac{823331}{1085924},
\frac{109999}{215554}
\right)
=
(0.7626891958,\;0.7581847348,\;0.5103083218).
\]
The exact initial ordering is therefore \(1>2>3\).

\begin{table}[H]
\caption{Exact and \(t\)-step approximate Whittle indices for the
three-arm high-discount instance with \(\beta=0.9999\).}
\label{tab:high-beta-ranking}
\centering
\small
\begin{tabular}{lcccc}
\toprule
Method & Arm 1 & Arm 2 & Arm 3 & Ordering \\
\midrule
Exact
& \(0.7626891958\)
& \(0.7581847348\)
& \(0.5103083218\)
& \(1>2>3\) \\
\(t=1\)
& \(0.7195963682\)
& \(0.7500332014\)
& \(0.5000105161\)
& \(2>1>3\) \\
\(t=2\)
& \(0.7536549707\)
& \(0.7503561253\)
& \(0.5030864197\)
& \(1>2>3\) \\
\(t=3\)
& \(0.7626891958\)
& \(0.7565743293\)
& \(0.5103083218\)
& \(1>2>3\) \\
\(t=4\)
& \(0.7626891958\)
& \(0.7581745846\)
& \(0.5103083218\)
& \(1>2>3\) \\
\bottomrule
\end{tabular}
\end{table}

As shown in Table~\ref{tab:high-beta-ranking}, the one-step
approximation reverses the ordering of Arms 1 and 2. Nevertheless,
\(t=2\) already recovers the exact ordering \(1>2>3\) and therefore
selects the same arm as the exact Whittle index policy. The ordering
remains unchanged at \(t=3\) and \(t=4\). Thus, even when the discount
factor is extremely close to one, a small lookahead depth can be
sufficient to stabilize the ranking of arms.

\subsection{Exact-comparable on small-scale experiments}

The previous experiment measures index-level convergence.  We also check whether the high depth of lookahead approximate Whittle index has better performance in a small finite horizon problem where exact dynamic programming is still feasible.  The instance has six arms, budget \(1\), discount factor \(\beta=0.99\), and horizon
up to \(T=6\).  We compare optimal policy, the one-step threshold index, the
\(t=2\) and \(t=5\) threshold-index policies, and myopic allocation.

\begin{figure}[H]
\centering
\includegraphics[width=0.78\linewidth]{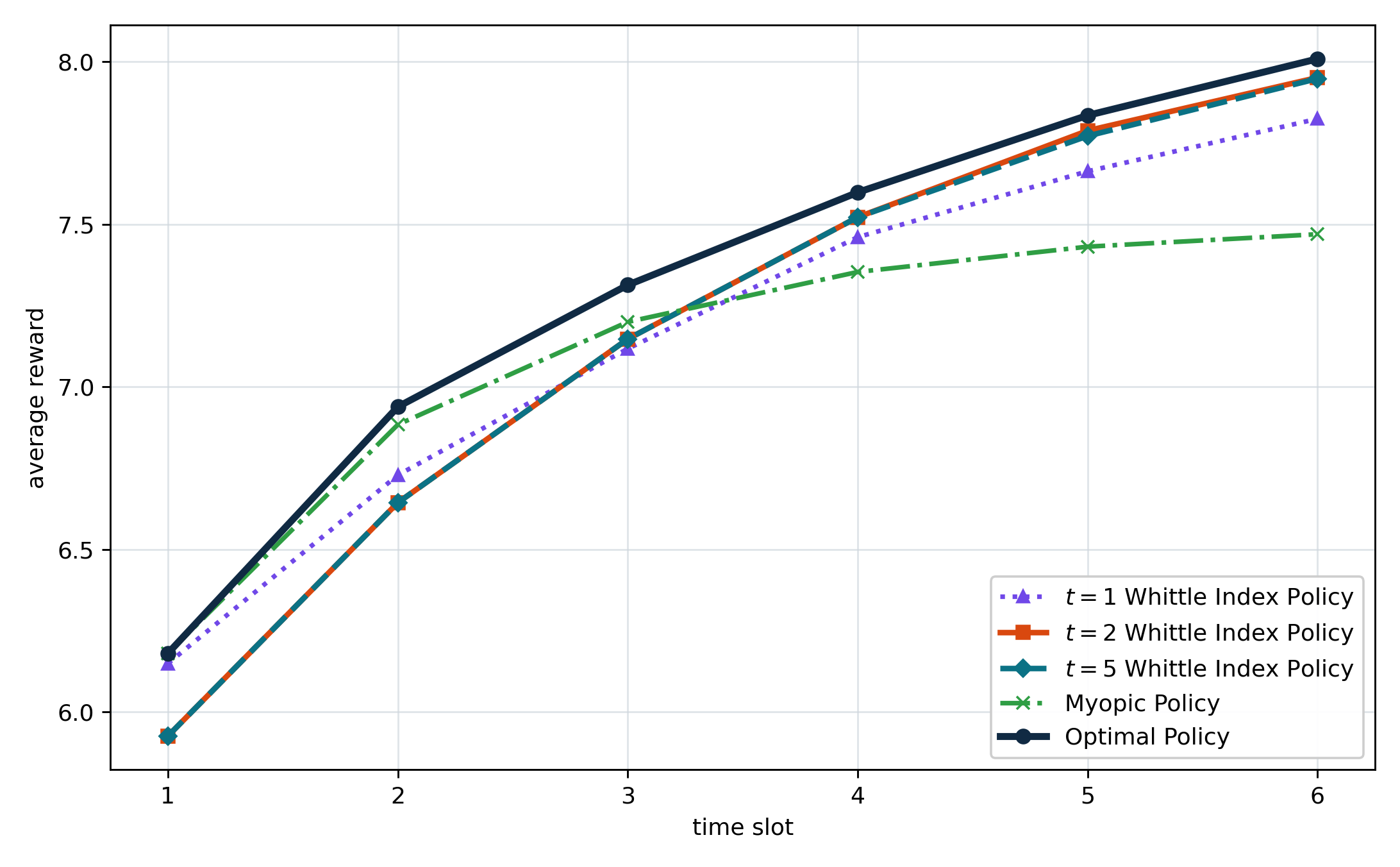}
\caption{small-scale experiments}
\label{fig:policy-sanity-main}
\end{figure}

Figure~\ref{fig:policy-sanity-main} reports that the optimal policy provides the highest
average reward throughout the horizon.  The myopic policy is competitive in the early slots, but its reward curve flattens in the later slots, leaving a significant gap from the optimal benchmark. In contrast, the finite lookahead policies close most of this gap.  The improvement from $t=1$ to $t=2$ is especially clear in the later slots, while the $t=2$ and $t=5$ curves are nearly the same on this instance but still perform better than $t=1$ Whittle index policy. 

\subsection{Runtime of computing different lookahead approximate Whittle index}

\begin{center}
\small
\setlength{\tabcolsep}{4pt}
\begin{tabular}{rrrr}
\toprule
$t$ & samples & median sec/sample & P90 sec/sample \\
\midrule
1 & 80 & 0.0509 & 0.0727 \\
2 & 80 & 0.0834 & 0.0879 \\
3 & 80 & 0.0911 & 0.0957 \\
5 & 80 & 0.1090 & 0.1140 \\
8 & 80 & 0.1372 & 0.1431 \\
15 & 80 & 0.2009 & 0.2052 \\
\bottomrule
\end{tabular}
\end{center}

The per-sample runtime increases predictably with the lookahead depth.  The
median cost rises from \(0.0509\) seconds at \(t=1\) to \(0.2009\) seconds at
\(t=15\), while the P90 values remain close to the medians.  This suggests that
the computation is stable across samples and that deeper lookahead does not
create heavy-tailed runtime behavior in this experiment.  Combined with the depth results, the table indicates that moderate depths such as
\(t=5\) or \(t=8\) offer a reasonable accuracy-cost tradeoff.

\section{Conclusion} We extended the one-step linearized threshold of \citet{Liu2025} to a $t$-step lookahead family and derived the corresponding finite-$t$ approximate Whittle index through the induced first-crossing linear system. The computation itself does not require indexability as an input. When multiple candidate subsidy solutions arise, Algorithm~\ref{alg:indexverification} further provides an indexability verification. Under the original Whittle indexability, we prove that the finite-$t$ approximate index converges geometrically to the exact Whittle index, \[ |\widehat W_t(\omega)-W(\omega)|=O(\beta^t). \] Numerically, all 2,715 tested three-state instances are verified with computable index functions according to the proposed algorithmic framework. Increasing the lookahead depth substantially improves the index accuracy, while a small lookahead depth can already recover the correct index ranking even when $\beta$ is close to one. The resulting threshold-index policies improve upon the one-step and myopic baselines and remain close to the optimal dynamic-programming benchmark on small exact-comparable instances, with a moderate increase in computational cost. The $t$-step threshold and first-crossing construction also provides a computable approximation of the unknown decision boundary and lays a theoretical foundation for future applications of PCL methods to high-dimensional partially observable models.


\bibliographystyle{plainnat}
\bibliography{references}

\clearpage
\appendix

\section{Notation}

This appendix gives the full proofs for the results stated in the main paper.  The belief space is $\X=\Delta_{K-1}$, the passive belief update is $P^1(\omega)=\omega P$, and $p_i$ denotes row $i$ of the transition matrix. 
For subsidy $m$, $\Delta_{\beta,m}(\omega)=V_{\beta,m}(\omega;u=1)-V_{\beta,m}(\omega;u=0)$ is the exact active-minus-passive gap.  The passive set, active set, and exact indifference boundary are
\[
P(m)=\{\omega:\Delta_{\beta,m}(\omega)\le0\},\quad
A(m)=\{\omega:\Delta_{\beta,m}(\omega)>0\},\quad
C(m)=\{\omega:\Delta_{\beta,m}(\omega)=0\}.
\]

\begin{center}
\textbf{Notation used in the main paper and appendix.}

\small
\setlength{\tabcolsep}{4pt}
\begin{tabular}{p{0.28\linewidth}p{0.62\linewidth}}
\toprule
Symbol & Meaning \\
\midrule
$\mathcal X=\Delta_{K-1}$
& Belief simplex over $K$ hidden states \\

$P$, $p_i$
& Transition matrix and its $i$th row \\

$B$
& Reward vector for the active action \\

$\beta$
& Discount factor \\

$m$
& Subsidy for the passive action \\

$V_{\beta,m}$
& Optimal infinite-horizon value function at subsidy $m$ \\

$\Delta_{\beta,m}(x)$
& Infinite-horizon optimal value gap \\

$P(m)$, $A(m)$, $C(m)$
& Passive set, active set, and decision boundary at subsidy $m$ \\

$W(\omega)$, $m^\star$
& Exact Whittle index at $\omega$, with $m^\star=W(\omega)$ \\

$J_h^{\beta,m}$
& Optimal $h$-step finite-horizon value function at subsidy $m$ \\

$Q_h^{A,\beta,m}$, $Q_h^{P,\beta,m}$
& Optimal $h$-step active and passive action values \\

$r_{t,m}(x)$
& $t$-step value gap at belief $x$ \\

$m_t$
& Finite-horizon indifference subsidy satisfying
$r_{t,m_t}(\omega)=0$, used to construct $\pi_t$ \\

$\pi_t$
& Threshold policy defined by
$\pi_t(x)=\mathbf 1\{r_{t,m_t}(x)>0\}$ \\

$L_t(x,\omega)$
& First-crossing time from belief $x$ under consecutive
passive actions, with $\pi_t$ fixed \\

$\hat V_{t,m}$
& Infinite-horizon value function of the fixed policy
$\pi_t$, evaluated at subsidy $m$ \\

$\widehat W_t(\omega)$, $m_0$
& Approximate Whittle index obtained from the
infinite-horizon indifference equation under the fixed policy
$\pi_t$, with $m_0=\widehat W_t(\omega)$ \\
\bottomrule
\end{tabular}
\end{center}

\section{Proof of Lemma 1}
\label{app:two-source-consistency}

\paragraph{Lemma 1 (Lookahead error).}
For any fixed $m\in M$,
\[
\sup_{x\in\mathcal X}|r_{t,m}(x)-\Delta_{\beta,m}(x)|\le \frac{2\beta^t}{1-\beta}=O(\beta^t).
\]

\begin{proof}

\textbf{Step 1: $\|J_t^{\beta,m}-V_{\beta,m}\|_\infty=O(\beta^t)$.}

Define the single-arm Bellman operator $\mathcal B_{\beta,m}$ acting on a function $J:\mathcal X\to\mathbb R$ by
\[
(\mathcal B_{\beta,m}J)(x)
=\max\Big\{\,\underbrace{xB^\top+\beta\sum_{i=0}^{K-1}x_i J(p_i)}_{\text{active value}},\ \underbrace{m+\beta J(P^1(x))}_{\text{passive value}}\,\Big\}.
\]
By construction, $J_t^{\beta,m}=\mathcal B_{\beta,m}J_{t-1}^{\beta,m}$ with $J_0^{\beta,m}\equiv 0$, and $V_{\beta,m}$ is the unique fixed point of $\mathcal B_{\beta,m}$.

The operator $\mathcal B_{\beta,m}$ is a $\beta$-contraction on $\|\cdot\|_\infty$: for any $J,J'$\citet[Chapter~6]{Puterman1994},
\[
\|\mathcal B_{\beta,m}J-\mathcal B_{\beta,m}J'\|_\infty\le \beta\|J-J'\|_\infty,
\]
because the only $J$-dependent terms are the discounted continuations and the operator $\max\{\cdot,\cdot\}$ is non-expansive. Iterating $t$ times,
\[
\|J_t^{\beta,m}-V_{\beta,m}\|_\infty
=\|\mathcal B_{\beta,m}^t J_0-\mathcal B_{\beta,m}^t V_{\beta,m}\|_\infty
\le \beta^t\|J_0-V_{\beta,m}\|_\infty.
\]
Since $B_i\in[0,1]$ and $m\in M=[0,1]$, the one period reward of any policy lies in $[0,1]$, so $\|V_{\beta,m}\|_\infty\le \frac{1}{1-\beta}$. With $J_0\equiv 0$ this gives
\[
\|J_t^{\beta,m}-V_{\beta,m}\|_\infty\le \frac{\beta^t}{1-\beta}=O(\beta^t).
\]

\textbf{Step 2: Transfer the value-function error to action values.}

We now compare the finite-lookahead action values with the exact
infinite-horizon action values. First consider the active branch. By definition,
\[
Q_t^{A,\beta,m}(x)
=
xB^\top+\beta\sum_i x_i J_{t-1}^{\beta,m}(p_i),
\]
whereas
\[
V_{\beta,m}(x;u=1)
=
xB^\top+\beta\sum_i x_i V_{\beta,m}(p_i).
\]
Subtracting the two expressions, the immediate reward \(xB^\top\) cancels:
\[
Q_t^{A,\beta,m}(x)-V_{\beta,m}(x;u=1)
=
\beta\sum_i x_i
\left[
J_{t-1}^{\beta,m}(p_i)-V_{\beta,m}(p_i)
\right].
\]
Taking absolute values and using the triangle inequality,
\[
\begin{aligned}
\left|
Q_t^{A,\beta,m}(x)-V_{\beta,m}(x;u=1)
\right|
&=
\beta
\left|
\sum_i x_i
\left[
J_{t-1}^{\beta,m}(p_i)-V_{\beta,m}(p_i)
\right]
\right| \\
&\le
\beta
\sum_i x_i
\left|
J_{t-1}^{\beta,m}(p_i)-V_{\beta,m}(p_i)
\right|.
\end{aligned}
\]
Since \(x\) is a belief vector, \(x_i\ge 0\) and \(\sum_i x_i=1\). Moreover,
by the definition of the sup norm,
\[
\left|
J_{t-1}^{\beta,m}(p_i)-V_{\beta,m}(p_i)
\right|
\le
\|J_{t-1}^{\beta,m}-V_{\beta,m}\|_\infty .
\]
Therefore
\[
\begin{aligned}
\left|
Q_t^{A,\beta,m}(x)-V_{\beta,m}(x;u=1)
\right|
&\le
\beta
\sum_i x_i
\|J_{t-1}^{\beta,m}-V_{\beta,m}\|_\infty \\
&=
\beta
\|J_{t-1}^{\beta,m}-V_{\beta,m}\|_\infty .
\end{aligned}
\]
Applying Step~1 with \(t-1\) gives
\[
\|J_{t-1}^{\beta,m}-V_{\beta,m}\|_\infty
\le
\frac{\beta^{t-1}}{1-\beta}.
\]
Hence
\[
\left|
Q_t^{A,\beta,m}(x)-V_{\beta,m}(x;u=1)
\right|
\le
\frac{\beta^t}{1-\beta}.
\]
Taking the supremum over \(x\),
\begin{equation}
\sup_{x\in\mathcal X}
\left|
Q_t^{A,\beta,m}(x)-V_{\beta,m}(x;u=1)
\right|
\le
\frac{\beta^t}{1-\beta}.
\label{eq:active-action-error}
\end{equation}

The passive branch is analogous. By definition,
\[
Q_t^{P,\beta,m}(x)
=
m+\beta J_{t-1}^{\beta,m}(P^1(x)),
\]
whereas
\[
V_{\beta,m}(x;u=0)
=
m+\beta V_{\beta,m}(P^1(x)).
\]
Subtracting cancels the subsidy \(m\):
\[
Q_t^{P,\beta,m}(x)-V_{\beta,m}(x;u=0)
=
\beta
\left[
J_{t-1}^{\beta,m}(P^1(x))
-
V_{\beta,m}(P^1(x))
\right].
\]
Therefore
\[
\begin{aligned}
\left|
Q_t^{P,\beta,m}(x)-V_{\beta,m}(x;u=0)
\right|
&=
\beta
\left|
J_{t-1}^{\beta,m}(P^1(x))
-
V_{\beta,m}(P^1(x))
\right| \\
&\le
\beta
\|J_{t-1}^{\beta,m}-V_{\beta,m}\|_\infty \\
&\le
\frac{\beta^t}{1-\beta}.
\end{aligned}
\]
Taking the supremum over \(x\),
\begin{equation}
\sup_{x\in\mathcal X}
\left|
Q_t^{P,\beta,m}(x)-V_{\beta,m}(x;u=0)
\right|
\le
\frac{\beta^t}{1-\beta}.
\label{eq:passive-action-error}
\end{equation}

\textbf{Step 3: Combine the active and passive action-value errors.}

By definition,
\[
r_{t,m}(x)
=
Q_t^{A,\beta,m}(x)-Q_t^{P,\beta,m}(x),
\]
and
\[
\Delta_{\beta,m}(x)
=
V_{\beta,m}(x;u=1)-V_{\beta,m}(x;u=0).
\]
Therefore
\[
\begin{aligned}
r_{t,m}(x)-\Delta_{\beta,m}(x)
&=
\left[
Q_t^{A,\beta,m}(x)-Q_t^{P,\beta,m}(x)
\right]
-
\left[
V_{\beta,m}(x;u=1)-V_{\beta,m}(x;u=0)
\right] \\
&=
\left[
Q_t^{A,\beta,m}(x)-V_{\beta,m}(x;u=1)
\right]
-
\left[
Q_t^{P,\beta,m}(x)-V_{\beta,m}(x;u=0)
\right].
\end{aligned}
\]
Define
\[
E_A(x):=
Q_t^{A,\beta,m}(x)-V_{\beta,m}(x;u=1),
\qquad
E_P(x):=
Q_t^{P,\beta,m}(x)-V_{\beta,m}(x;u=0).
\]
Then
\[
r_{t,m}(x)-\Delta_{\beta,m}(x)=E_A(x)-E_P(x).
\]
By the triangle inequality,
\[
\left|
r_{t,m}(x)-\Delta_{\beta,m}(x)
\right|
=
|E_A(x)-E_P(x)|
\le
|E_A(x)|+|E_P(x)|.
\]
Using \eqref{eq:active-action-error} and \eqref{eq:passive-action-error},
\[
|E_A(x)|
\le
\frac{\beta^t}{1-\beta},
\qquad
|E_P(x)|
\le
\frac{\beta^t}{1-\beta}.
\]
Hence, for every \(x\in\mathcal X\),
\[
\left|
r_{t,m}(x)-\Delta_{\beta,m}(x)
\right|
\le
\frac{\beta^t}{1-\beta}
+
\frac{\beta^t}{1-\beta}
=
\frac{2\beta^t}{1-\beta}.
\]
Taking the supremum over \(x\), we obtain
\[
\sup_{x\in\mathcal X}
\left|
r_{t,m}(x)-\Delta_{\beta,m}(x)
\right|
\le
\frac{2\beta^t}{1-\beta}
=
O(\beta^t).
\]
\end{proof}

\section{Additional experiments}
\label{sec:additional-numerical-audits}

All reported experiments were run on a MacBook M4 Pro with 24GB RAM using CPU only; no GPU or external cluster was used.

\subsection{Additional small-scale experiments}

\begin{figure}[t]
\centering
\includegraphics[width=0.78\linewidth]{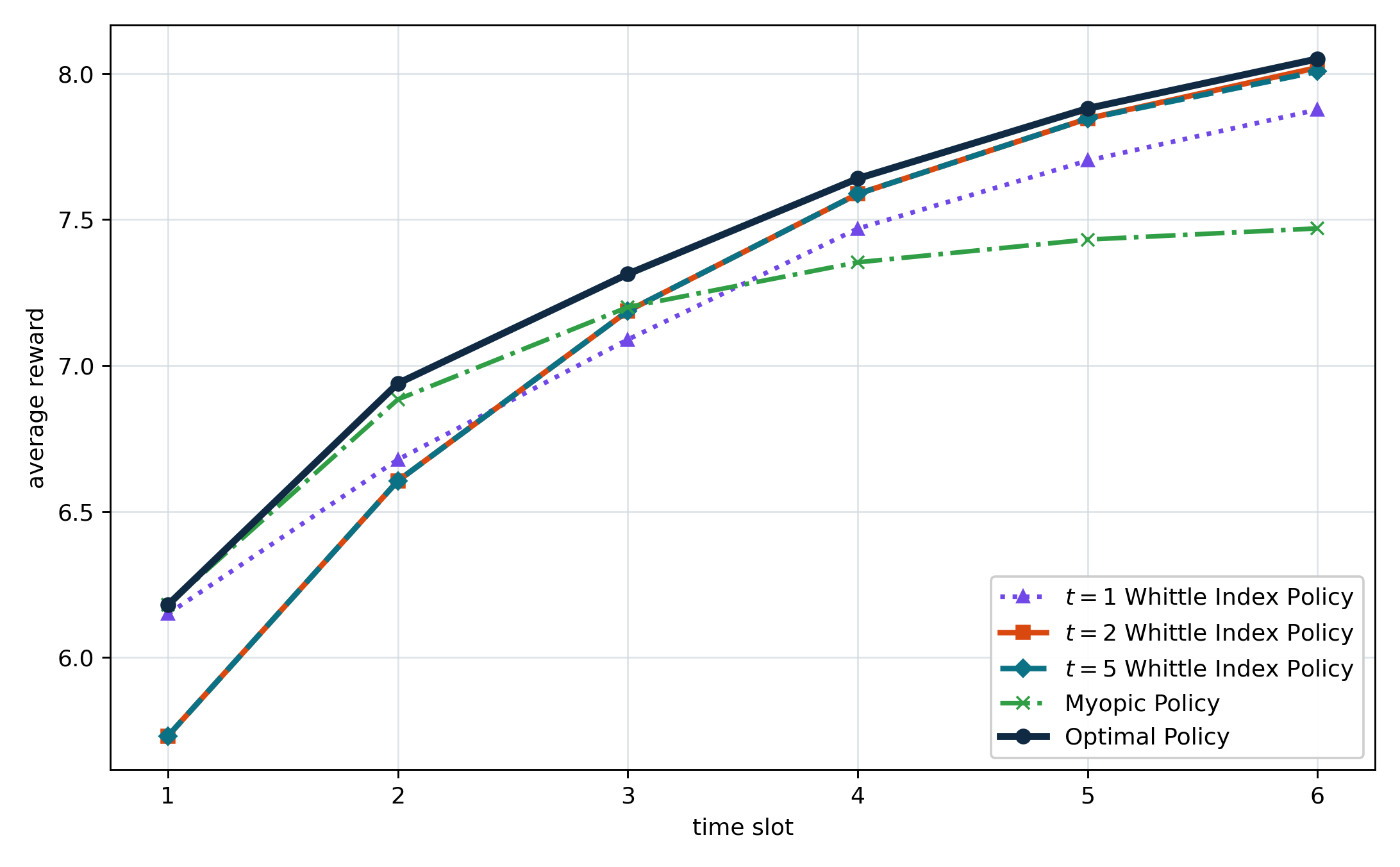}
\caption{Experiment with larger $t=2$ and $t=5$ versus $t=1$ separation.  This instance highlights that the t-step threshold can repair a one-step threshold loss, while remaining close to optimal policy in the finite-horizon comparison.}
\label{fig:appendix-policy-t1-separati}
\end{figure}

\begin{figure}[t]
\centering
\includegraphics[width=0.78\linewidth]{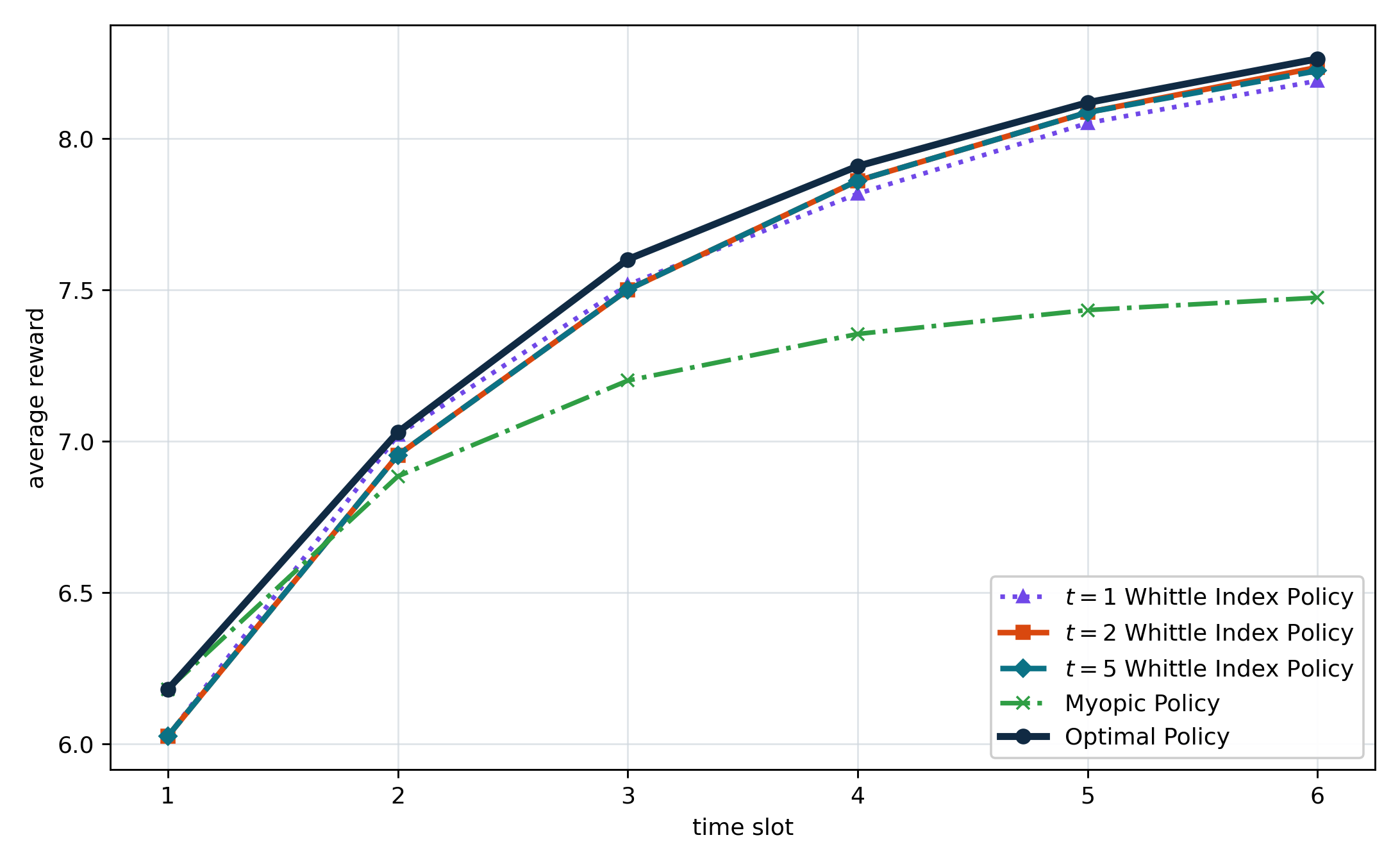}
\caption{Experiment with a large myopic gap.  Both $t=2$ and $t=5$ threshold-index policies outperform myopic by about ten percent and remain close to optimal policy.}
\label{fig:appendix-policy-myopic-gap}
\end{figure}

\begin{figure}[t]
\centering
\includegraphics[width=0.78\linewidth]{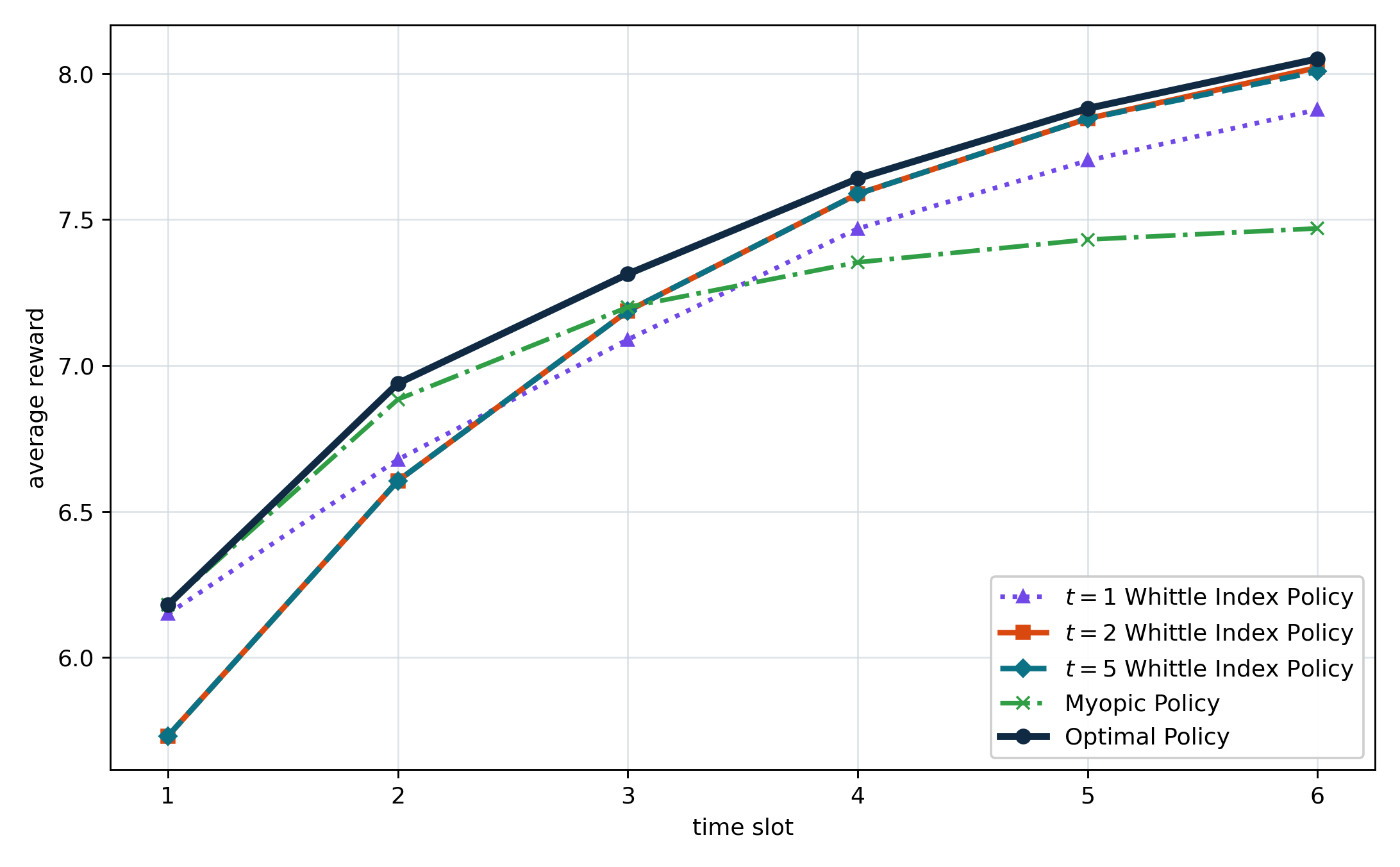}
\caption{Additional one-step separation instance.  The figure provides a same result with Figure 1: deeper threshold-index policies are close to optimal policy and above the one-step threshold at the final horizon.}
\label{fig:appendix-policy-rank2}
\end{figure}

\end{document}